\theoremstyle{plain}
\newtheorem{theorem}{Theorem}[section]
\newtheorem{proposition}[theorem]{Proposition}
\newtheorem{lemma}[theorem]{Lemma}
\newtheorem{corollary}[theorem]{Corollary}
\theoremstyle{definition}
\newtheorem{definition}[theorem]{Definition}
\theoremstyle{remark}
\newtheorem{remark}[theorem]{Remark}
\title{Alignment with human representations supports robust few-shot learning}
\author{%
 Ilia Sucholutsky%
    \\
  Department of Computer Science\\
  Princeton University\\
  \texttt{is2961@princeton.edu} \\
  \And
  Thomas L. Griffiths\\
  Departments of Psychology and Computer Science\\
  Princeton University\\
  \texttt{tomg@princeton.edu}
}
\begin{document}

\maketitle

\begin{abstract}
  Should we care whether AI systems have representations of the world that are similar to those of humans? We provide an information-theoretic analysis that suggests that there should be a U-shaped relationship between the degree of representational alignment with humans and performance on few-shot learning tasks. We confirm this prediction empirically, finding such a relationship in an analysis of the performance of 491 computer vision models. We also show that highly-aligned models are more robust to both natural adversarial attacks and domain shifts. Our results suggest that human alignment is often a sufficient, but not necessary, condition for models to make effective use of limited data, be robust, and generalize well.
\end{abstract}

\section{Introduction}
As AI systems are increasingly deployed in settings that involve interactions with humans, exploring the extent to which these systems are aligned with humans becomes more significant. While this exploration has largely focused on the alignment of the {\em values} of AI systems with humans~\citep{gabriel2020artificial,kirchner2022researching}, the alignment of their {\em representations} is also important. Representing the world in the same way is a precursor to being able to express common values and to comprehensible generalization. To the extent that humans have accurate representations of the world, representational alignment is also an effective source of inductive bias that might make it possible to learn from limited data.

As a motivating example, imagine a meeting between a 16th century alchemist and a 21st century chemist. They live in the same physical world and are intimately familiar with the materials that comprise it, but they would have significant difficulty expressing their values and generalizing the results of an experiment they observe together. The alchemist would likely learn poorly from examples of a reaction demonstrated by the chemist, not having the right inductive biases for the way the world actually works. The alchemist and the chemist lack representational alignment -- they represent the world in fundamentally different ways -- and this impedes generalization and learning. 

In this paper, we provide a theoretical and empirical investigation of the consequences of representational alignment with humans for AI systems, using popular computer vision tasks as a way to study this phenomenon.

We define representational alignment as the degree to which the latent representations of a model match the latent representations of humans for the same set of stimuli, and  refer to models that are representationally aligned with humans as being “human-aligned.” Several recent papers have proposed ways to measure~\cite{marjieh2022predicting,marjieh2022words}, explain~\cite{muttenthaler2022human,kumar2022better}, and even improve~\cite{peterson2018evaluating,fel2022harmonizing} the representational alignment of models. However, many models that score low on these alignment metrics still have high performance on downstream tasks like image classification~\cite{kumar2022better,muttenthaler2022human,fel2022harmonizing}. 

\begin{figure*}[t!]
    \centering
    \includegraphics[width=0.9\linewidth]{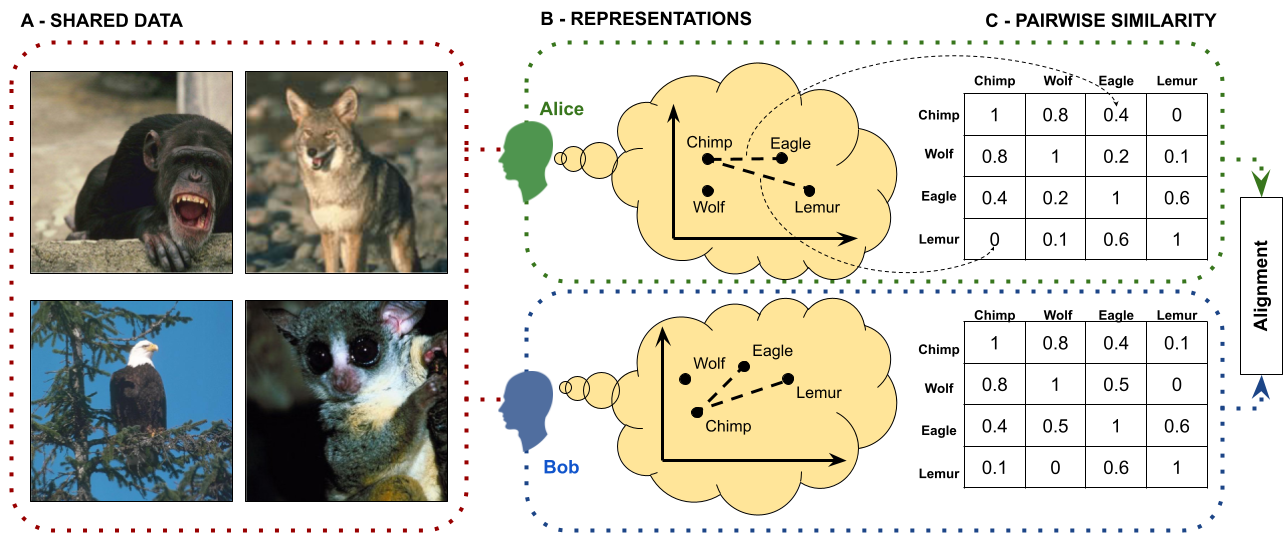}
    \caption{Schematic of representational alignment between two agents, Alice and Bob. \textbf{A}: Shared data ($x$) is shown to both agents (images are from the animal subset of the similarity datasets from~\citet{peterson2018evaluating}). \textbf{B}: Both agents form representations ($f_A(x)$ and $f_B(x)$) of the objects they observe. \textbf{C}: Agents are asked to produce pairwise similarity matrices corresponding to their representations. The similarity judgments can then be compared to measure alignment between the agents.\vspace{-4mm}}
    \label{fig:RAschema}
\end{figure*}
\begin{figure*}[t!]
    \centering
    \includegraphics[width=0.9\linewidth]{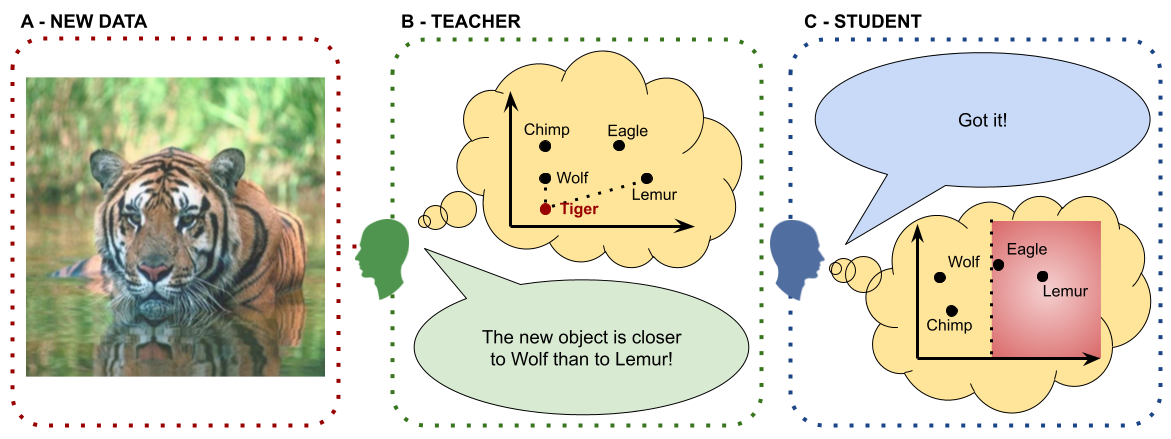}
    \caption{Schematic of triplet-based supervised learning where one agent is a teacher and the other a student. \textbf{A}: A new object is shown only to the Teacher. \textbf{B}: The Teacher forms a representation of the object and sends the Student a triplet relating the new object to two objects previously observed by both agents. \textbf{C}: The Student interprets the triplet in their own representation space and eliminates the half-plane where the new object cannot be located (shaded in red) according to the triplet.  \vspace{-4mm}}
    \label{fig:FSLschema}
\end{figure*}
So, are there any real advantages (or disadvantages) to using human-aligned models? To answer this question, we develop an information-theoretic framework for analyzing representational alignment. This framework enables us to make predictions about emergent behavior in human-aligned models. In particular, our framework predicts that few-shot transfer learning performance should have a U-shaped relationship with alignment. To verify the predictions of our framework and to probe for additional properties that arise as a consequence of alignment, we conduct a series of experiments in which we assess the downstream properties of human-aligned models compared to their non-aligned counterparts.  From our experiments comparing 491 large computer-vision models to over 425,000 human judgments (across 1200 participants), we identify three properties:
\begin{itemize}
\item Models with either high or low alignment with humans are better at few-shot learning (FSL) than models with medium alignment, even correcting for pre-training performance.

\item Human-aligned models are more robust to natural adversarial examples, even when correcting for classification performance on the pre-training dataset.

\item Human-aligned models are more robust to domain shift, even when correcting for classification performance on the pre-training dataset.
\end{itemize}

The U-shaped relationship between alignment and few-shot learning helps to explain why previous results have not consistently observed benefits of representational alignment. Overall, our results suggest that representational alignment can provide real and significant benefits for downstream tasks, but that it may be a sufficient rather than necessary condition for these benefits to emerge.

\section{Related Work}

With AI systems entering mainstream usage, aligning these models with human values and understanding is increasingly important~\citep{gabriel2020artificial,kirchner2022researching}. This concept, referred to as AI alignment, is an important but still largely open problem~\citep{yudkowsky2016ai}, in part due to the difficulty of formalizing it~\cite{soares2014aligning}, or even reaching consensus on a definition~\cite{kirchner2022researching}. In this paper, we focus on formalizing a specific aspect of AI alignment: representational alignment. 

Representational alignment is a measure of agreement between the representations of two learning agents (one of whom is typically a human). There are numerous names, definitions, measures, and uses of this form of alignment across various fields, including cognitive science, neuroscience, and machine learning. Some of the other names include latent space alignment~\citep{tucker2022latent}, concept(ual) alignment~\citep{STOLK2016180,muttenthaler2022human}, system alignment~\citep{GOLDSTONE2002295,roads2020learning,AHO2022105200}, representational similarity analysis (RSA)~\citep{RSA}, and model alignment~\cite{marjieh2022words}.
\citet{shepard1980multidimensional} proposed that human representations can be recovered by using behavioral data to measure the similarity of a set of stimuli and then finding embeddings that satisfy those similarity associations using methods like multi-dimensional scaling (MDS). Similarly, in neuroscience, Representational Similarity Analysis (RSA) is a popular technique for relating neural, behavioral, and computational representations of the same set of stimuli via similarity analysis \citep{RSA}. While similarity analysis has clearly proven itself to be a powerful method, exhaustively collecting pairwise similarity judgments is expensive ($O(N^2)$ judgments for $N$ stimuli) and there have been numerous proposals aiming to develop more efficient methods of recovering human representations. 

\citet{Jamieson2011LowdimensionalEU} proposed an active learning scheme for querying for human judgments using triplets of the form ``is $a$ closer to $b$ than to $c$?'' and derived bounds on the number of required queries for lossless completion of the full similarity matrix using such queries. When an approximate solution is acceptable, \citet{peterson2018evaluating} showed that pre-trained computer vision models can be used to approximate human perceptual similarity judgments over images. \citet{marjieh2022predicting} showed that human perceptual similarity can be more accurately, but still efficiently, approximated from natural language descriptions of the stimuli of interest (for example by using large language models to estimate similarity over pairs of these descriptions). \citet{marjieh2022words} extended this result to more domains (vision, audio, and video) and measured alignment for hundreds of pre-trained models. 

Several recent studies have also attempted to identify what design choices lead to improved representational alignment in models~\citep{kumar2022better,muttenthaler2022human,fel2022harmonizing}, although~\citet{moschella2022relative} found that even with variation in design choices, many models trained on the same dataset end up learning similar `relative representations' (embeddings projected into a relational form like a similarity matrix), or in other words, converge to the same representational space. \citet{tucker2022latent} showed that representational alignment emerges not only in static settings like image classification, but also dynamic reinforcement learning tasks involving human-robot interaction. Several studies have also focused on studying alignment specifically in humans, both between different people and for a single person but across multiple tasks and domains~\citep{GOLDSTONE2002295,roads2020learning}. 

Although several recent papers have proposed ways to measure~\cite{marjieh2022predicting,marjieh2022words}, explain~\cite{muttenthaler2022human,kumar2022better}, and even improve~\cite{peterson2018evaluating,fel2022harmonizing} the representational alignment of models, few have focused on studying the downstream impact of a model being representationally aligned with humans, and many studies simply rely on the intuition that better alignment leads to better performance to justify pursuing increased alignment. While there is recent evidence to suggest that alignment may help humans learn across domains and perform zero-shot generalization~\citep{AHO2022105200}, there is also evidence to suggest that alignment may not always be beneficial for models, with models scoring low on alignment metrics achieving higher performance on downstream tasks like image classification~\cite{kumar2022better,muttenthaler2022human,fel2022harmonizing}. Our goal in this paper is to conduct an in-depth study into the downstream effects of representational alignment. Our theoretical and empirical results both validate and explain the apparently conflicting results seen in previous literature as special cases of the (more complex than previously suspected) effects of representational alignment.

\section{Theory}
 If we want to measure representational alignment across different architectures with potentially mismatched embedding coordinate spaces and dimensionalities, then we need a definition of representation spaces that enables comparisons of such spaces. Inspired by the cognitive science literature on using non-metric similarity triplets to recover hidden psychological representations, and building on rigorous computational theory that analyzes such triplets~\citep{Jamieson2011LowdimensionalEU}, we propose a triplet-based definition of representation learning. We summarize representational alignment under our framework in the schematic shown in Figure~\ref{fig:RAschema}.
\begin{definition}
    As proposed by~\citet{Jamieson2011LowdimensionalEU}, for an ordered set of $n$ objects in $d$ dimensions represented as a vector $x\in\mathbbm{R}^{nd}$, a \textbf{similarity triplet} corresponding to the question `is $x_i$ closer to $x_j$ than to $x_k$?' is a membership query of the form $\mathbbm{1}_{x\in r_{ijk}}$ where $r_{ijk}=\{x\in\mathbbm{R}^{nd}: |x_i-x_j|<|x_i-x_k|\}$.
\end{definition}
\begin{definition}
    For an ordered set of objects $x\in\mathbbm{R}^{nd}$ and model $M$ with embeddings $f_M: \mathbbm{R}^{d} \rightarrow \mathbbm{R}^{d_M}$, the \textbf{triplet-based representation space} of $M$ is the set of all triplets $S_M(x)=\{(i,j,k)\in\{1,...,n\}^3: \mathbbm{1}_{x\in r^M_{ijk}}, i\neq j, j\neq k, k\neq i\}$ where $r^M_{ijk}=\{x\in\mathbbm{R}^{nd}: |f_M(x_i)-f_M(x_j)|<|f_M(x_i)-f_M(x_k)|\}$.
\end{definition}
\begin{remark}
Since switching the order of $j$ and $k$ in a triplet deterministically flips the query between 0 and 1, it is easy to see that, for a set of $n$ objects, a representation space is determined by a set of just $\frac{n(n-1)(n-2)}{2}$ unique triplets. Thus, for a set of $n$ objects, $S_M(x)$ can be represented as a binary vector of length $\frac{n(n-1)(n-2)}{2}$ and we use this interpretation in the remainder of this section.
\end{remark}
\begin{definition}
Consider a training set $x\in\mathbbm{R}^{nd}$ and a corresponding target representation space $S_T(x)$. We define \textbf{representation learning} with a parametrized model $M_\theta$ as optimizing the objective $\min_\theta \ell(S_T(x),S_{M_\theta}(x))$ where $\ell$ is a loss function penalizing divergence between the target and learned representation spaces. 
\end{definition}
\citet{SucholutskyRISS} showed that many ML objectives and associated supervision signals, ranging from similarity judgments for contrastive learning to labels for classification, can be converted into similarity triplets making them compatible with this definition of representation learning. We also immediately get an intuitive definition of alignment between two agents (e.g. a model and a person) as the rate of agreement on matched triplets (also known as simple match coefficient). 
\begin{definition}
Consider a dataset $x\in\mathbbm{R}^{nd}$ and two agents, $A$ and $B$, with corresponding representation spaces $S_A(x)$ and $S_B(x)$. We define \textbf{representational misalignment} between $A$ and $B$ as $D_R(A,B;x) = \frac{||S_A(x)-S_B(x)||_1}{n(n-1)(n-2)/2}$. \textbf{Representational alignment} is then simply $1-D_R(A,B;x)$.
\end{definition}
\begin{definition}\label{defn:SRA}
Consider a single set of three points $t=(x_i,x_j,x_k)\in X \subseteq \mathbbm{R}^{3d}$ sampled uniformly at random, and two agents, $A$ and $B$. We define \textbf{stochastic representational misalignment} between $A$ and $B$ as $D_{P}(A,B;X) = P(\mathbbm{1}_{x\in r^A_{ijk}}=\mathbbm{1}_{x\in r^B_{ijk}})$. \textbf{Stochastic representational alignment} is then simply $1-D_{P}(A,B;X)$.
\end{definition}

Since each triplet corresponds to one bit of information~\cite{Jamieson2011LowdimensionalEU}, and we have a probabilistic definition of alignment, we can now use information theory to define what it means for one agent to supervise another (potentially misaligned) agent. We visualize supervised learning under our framework in Figure~\ref{fig:FSLschema}. 
For perfectly aligned models, \citet{Jamieson2011LowdimensionalEU} derive the following result. 
\begin{lemma}\label{thm:learn1}
    Consider input space $X \subseteq \mathbbm{R}^{nd}$, shared data $x\sim X$, new object $c\in\mathbb{R}^d$, and models $A$ and $B$ with $D_{P}(A,B;X)=0$. $\Omega(d\log(n))$ triplet queries are required for $B$ to identify the location of $c$ relative to all objects in $x$.
\end{lemma}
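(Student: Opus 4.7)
The plan is to prove this as a standard information-theoretic lower bound by counting the number of equivalence classes induced on the location of $c$ by all possible triplet query outcomes. Since $D_P(A,B;X)=0$, $B$ receives noise-free triplet information and the problem reduces to: how many yes/no questions of the form ``is $c$ closer to $x_j$ than to $x_k$?'' does $B$ need to localize $c$ relative to $x$? Each such question is a membership query against the half-space defined by the perpendicular bisector hyperplane $H_{jk}=\{y\in\mathbb{R}^d:|y-x_j|<|y-x_k|\}$, and ``locating $c$ relative to $x$'' means exactly identifying the cell of the arrangement $\mathcal{A}=\{H_{jk}:1\le j<k\le n\}$ that contains $c$.

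First I would observe that $\mathcal{A}$ is an arrangement of $m=\binom{n}{2}$ hyperplanes in $\mathbb{R}^d$, and invoke the classical Zaslavsky / Buck bound that such an arrangement in general position partitions $\mathbb{R}^d$ into at most $\Theta(m^d)=\Theta(n^{2d})$ open cells, and has at least $\Omega(m^d)=\Omega(n^{2d})$ cells when the $x_i$ are in general position (which may be assumed by a perturbation / genericity argument on $X$). Thus the triplet-equivalence classes of possible locations of $c$ number $\Omega(n^{2d})$.

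Next I would run the standard adversarial information-theoretic argument: any learner issuing $Q$ triplet queries produces a transcript in $\{0,1\}^Q$, and two candidate locations of $c$ lying in distinct cells of $\mathcal{A}$ must yield at least one differing transcript bit; hence $B$ can distinguish among the cells only if $2^Q\ge \Omega(n^{2d})$, i.e., $Q=\Omega(d\log n)$. The adaptive version of this argument requires pushing through an adversary that, at each step, reports the answer consistent with the larger of the two remaining candidate sets, which at most halves the surviving cells per query and so forces $Q\ge \log_2(\text{\#cells})$.

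The main obstacle, and the only step that needs real care, is the lower bound on the number of cells: the upper bound $O(m^d)$ is immediate, but to turn this into a necessary condition one needs a genuine lower bound on $|\text{cells}(\mathcal{A})|$ for the perpendicular-bisector arrangement, which is not an arrangement of generic hyperplanes (the $\binom{n}{2}$ bisectors have structured incidences). The standard fix is to note that for $x$ in general position in $\mathbb{R}^d$, the bisectors are themselves in general position and Zaslavsky's formula gives $\sum_{i=0}^d \binom{m}{i}=\Theta(n^{2d})$ cells; alternatively one cites the Jamieson--Nowak construction directly, since their argument already establishes the matching lower bound via a VC-dimension / shattering computation on the class of nearest-neighbor labelings. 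Either route yields the claimed $\Omega(d\log n)$ bound.
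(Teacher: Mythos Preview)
The paper does not give its own proof of this lemma; it simply attributes the result to Jamieson and Nowak (2011) and states it without argument. Your proposal therefore supplies what the paper omits, and the cell-counting plus adversary argument you sketch is in fact the core of the Jamieson--Nowak lower bound: count the cells of the perpendicular-bisector arrangement, take a logarithm, and conclude that any binary-query protocol needs at least that many bits. In that sense your route and the paper's (implicit, cited) route coincide.

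One technical point deserves correction. You write that ``for $x$ in general position in $\mathbb{R}^d$, the bisectors are themselves in general position.'' This is false: already for three non-collinear points in the plane, the three pairwise bisectors are concurrent at the circumcenter, so the $\binom{n}{2}$ bisector hyperplanes are never a generic arrangement. The Zaslavsky count $\sum_{i=0}^d \binom{m}{i}$ is therefore only an upper bound here, and your first proposed fix does not work as stated. What does work is your second option: Jamieson and Nowak handle exactly this degeneracy and still establish that the number of full-dimensional cells of the bisector arrangement is $\Omega(n^{2d})$ for points in general position (equivalently, that $\Theta(n^{2d})$ distinct distance-orderings of the $x_i$ from a query point $c$ are realizable). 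Once that cell count is secured, your information-theoretic halving argument goes through verbatim and yields $Q \ge \log_2 \Omega(n^{2d}) = \Omega(d\log n)$. So the proposal is sound provided you drop the erroneous general-position claim and rely on the Jamieson--Nowak cell bound, which is precisely what the paper itself invokes.
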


\begin{proposition}\label{thm:learn-eps}
    For input space $X \subseteq \mathbbm{R}^{nd}$, shared data $x\sim X$, new object $c\in\mathbb{R}^d$, and models $A, B$ with $D_{P}(A,B;X)=\epsilon$, $\Omega(\frac{d\log(n)}{1+\epsilon\log(\epsilon)+(1-\epsilon)\log(1-\epsilon)})$ triplets are required for $B$ to learn $c$.
\end{proposition}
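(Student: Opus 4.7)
The plan is to recognize that the statement follows from Lemma 3.1 by casting each triplet that the teacher $A$ sends to the student $B$ as one use of a binary symmetric channel with crossover probability $\epsilon$, and then invoking Shannon's channel-coding converse. The denominator $1 + \epsilon\log(\epsilon) + (1-\epsilon)\log(1-\epsilon)$ is precisely the capacity (in bits per use) of BSC$(\epsilon)$, which is a strong hint that this is the right reduction.

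First I would fix $B$'s representation as the reference and examine what $B$ actually receives. For any triplet $(i,j,k)$ over points shared by the two agents and the new object $c$, $A$ sends the bit $\mathbbm{1}_{x \in r^A_{ijk}}$, whereas the bit $B$ would itself assign is $\mathbbm{1}_{x \in r^B_{ijk}}$. By Definition 3.5 these two bits differ with probability exactly $\epsilon$ when the underlying triplet is drawn from the distribution on $X$ used to define stochastic misalignment. Thus each teacher message is a noisy copy of the corresponding noiseless message that would drive the algorithm of Lemma 3.1, and a sequence of $T$ queries is equivalent to $T$ uses of BSC$(\epsilon)$ whose input codeword encodes the location of $c$ in $B$'s own triplet-based representation space.

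Next I would apply Fano's inequality (or the weak converse to the noisy channel coding theorem) to these $T$ channel uses. Lemma 3.1 establishes that in the noiseless case $\Omega(d \log n)$ bits are required for $B$ to pin down the cell of the hyperplane arrangement in which $c$ lies among $n$ shared points in $d$ dimensions; this is really a statement that the target hypothesis carries $\Omega(d \log n)$ bits of entropy from $B$'s perspective. The capacity bound then gives $T \cdot \bigl(1 + \epsilon\log(\epsilon) + (1-\epsilon)\log(1-\epsilon)\bigr) \;\ge\; \Omega(d \log n)$, which rearranges to the claimed bound on $T$.

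The main obstacle I expect is justifying the i.i.d./memoryless BSC model: $\epsilon$ is defined as a probability over a uniformly random triplet, but an adaptive learning protocol chooses triplets sequentially and possibly adversarially, so the per-query flip events are not a priori independent. A clean resolution is to prove the lower bound in expectation over the randomness of $x \sim X$ guaranteed by the setup of the proposition, so that for every fixed query strategy the marginal flip probability on each query is $\epsilon$ and mutual information sub-additivity gives the capacity bound with at most an $o(1)$ loss. A secondary, mostly bookkeeping, issue is confirming that the lower-bound mechanism in Lemma 3.1 (counting equivalence classes of an arrangement of $O(n^2)$ bisecting hyperplanes in $\mathbbm{R}^d$) is genuinely information-theoretic rather than query-structural, so that Fano may be applied to it; this is immediate from the VC-style counting argument of~\citet{Jamieson2011LowdimensionalEU}, so I would simply cite that step rather than rederive it.
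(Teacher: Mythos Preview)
Your approach is essentially identical to the paper's: model Alice-to-Bob triplet transmission as a BSC$(\epsilon)$, invoke the Shannon converse so that each query delivers at most $C_\epsilon = 1 + \epsilon\log\epsilon + (1-\epsilon)\log(1-\epsilon)$ useful bits, and combine with the $\Omega(d\log n)$ noiseless lower bound of Lemma~\ref{thm:learn1}. Your discussion of the i.i.d.\ issue and of whether Lemma~\ref{thm:learn1} is genuinely information-theoretic is more careful than the paper itself, which simply asserts the BSC equivalence and the rate bound without addressing adaptivity.
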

\begin{proof}
    Consider a communication game involving Alice and Bob, a dataset $x$ that they both have their own representations for (we call this `shared data'), and a new object $c$ that only Alice can see. Alice can only communicate with Bob by giving binary responses to triplet queries based on Alice's own representations of the objects. The goal is for Bob to learn the location of $c$ with respect to the other objects in $x$. $D_{P}(A,B;X)=\epsilon$ is the probability for any triplet drawn from $x\cup {c}$ to be flipped in Bob's representation space relative to Alice's. This is equivalent to Alice and Bob communicating over a binary symmetric channel where the probability of a bit flip is $\epsilon$. The capacity of this channel is $C_\epsilon=1-H(\epsilon,1-\epsilon)=1+\epsilon\log(\epsilon)+(1-\epsilon)\log(1-\epsilon)$. For error-free communication over this channel, the highest achievable rate is $R<C_\epsilon$, meaning at least $\frac{1}{C_\epsilon}$ bits are required to communicate each useful bit over this channel. By Lemma~\ref{thm:learn1}, $\Omega(d\log(n))$ useful bits are required for Bob to learn the location of $c$ over a channel with no error, and thus a total of $\Omega(\frac{d\log(n)}{C_\epsilon})$ bits are required. 
\end{proof}

\begin{theorem}[\textbf{Few-shot learning}]\label{thm:FSL}
    Consider input space $X \subseteq \mathbbm{R}^{nd}$, shared data $x\sim X$, new objects $c\in\mathbb{R}^{md}$ and three models $A$, $B_1$, and $B_2$ with $D_{P}(A,B_1;X)=\epsilon_{B_1}$ and $D_{P}(A,B_2;X)=\epsilon_{B_2}$. If $|0.5-\epsilon_{B_1}| < |0.5-\epsilon_{B_2}|$, then $B_2$ requires fewer queries to learn $c$ than $B_1$ does.
\end{theorem}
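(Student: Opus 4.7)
The plan is to lift Proposition~\ref{thm:learn-eps} to the multi-object case and then analyze how the binary symmetric channel capacity $C_\epsilon = 1 + \epsilon\log(\epsilon) + (1-\epsilon)\log(1-\epsilon) = 1 - H_2(\epsilon)$ depends on $\epsilon$. Since the query lower bound for $B$ scales as $\Omega\!\left(\frac{d\log n}{C_\epsilon}\right)$ per new object, comparing $B_1$ and $B_2$ will reduce to comparing $C_{\epsilon_{B_1}}$ and $C_{\epsilon_{B_2}}$.

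First, I would extend Proposition~\ref{thm:learn-eps} from a single new object to $m$ new objects $c \in \mathbbm{R}^{md}$ by running the communication game from its proof independently for each of the $m$ objects. The total query count then scales as $\Omega\!\left(\frac{m\,d\log n}{C_\epsilon}\right)$, and since the multiplicative $m$ factor is $\epsilon$-independent, the relative cost between $B_1$ and $B_2$ is still governed entirely by $C_\epsilon$.

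Next, I would observe that the binary entropy $H_2(\epsilon)$ is symmetric about $\epsilon = 0.5$ (because $H_2(\epsilon) = H_2(1-\epsilon)$), attains its unique maximum of $1$ at $\epsilon = 0.5$, and is strictly decreasing in $|0.5 - \epsilon|$. Consequently $C_\epsilon = 1 - H_2(\epsilon)$ is symmetric about $0.5$ and strictly increasing in $|0.5 - \epsilon|$. The hypothesis $|0.5 - \epsilon_{B_1}| < |0.5 - \epsilon_{B_2}|$ then yields $C_{\epsilon_{B_1}} < C_{\epsilon_{B_2}}$, so $B_2$'s query lower bound is strictly smaller than $B_1$'s.

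The main obstacle is justifying the capacity argument in the anti-aligned regime $\epsilon > 0.5$, which the hypothesis explicitly permits (e.g.\ $\epsilon_{B_2}$ close to $1$). Standard presentations of the BSC capacity treat $\epsilon \le 0.5$ as the canonical case, so I would explicitly note that $B$ can invert every bit received from $A$ before decoding, converting a BSC with crossover $\epsilon > 0.5$ into an equivalent BSC with crossover $1-\epsilon < 0.5$ of the same capacity $1 - H_2(\epsilon)$. Because each triplet response is a single bit encoding membership in $r^A_{ijk}$, flipping a response simply corresponds to querying membership in the complementary region $\mathbbm{R}^{nd}\setminus r^A_{ijk}$, which is equally valid in the triplet-based representation framework. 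Verifying that this bit-flip reduction is consistent with the definition of triplet queries is the only conceptual step beyond the content already established in Proposition~\ref{thm:learn-eps}.
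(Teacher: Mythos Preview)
Your proposal is correct and follows essentially the same route as the paper: apply Proposition~\ref{thm:learn-eps} (scaled up to $m$ objects, giving $\Omega(md\log n / C_\epsilon)$ queries), then use the symmetry and monotonicity of $C_\epsilon = 1 - H_2(\epsilon)$ in $|0.5-\epsilon|$ to compare the two bounds. Your explicit treatment of the anti-aligned regime via bit-flipping is actually more careful than the paper's proof, which simply asserts the symmetry of $C_\epsilon$ and relegates the bit-flip intuition to a remark after the theorem.
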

\begin{proof}
From Proposition~\ref{thm:learn-eps}, it immediately follows that $B_1$ requires $\Omega(\frac{md\log(n)}{C_{\epsilon_{B_1}}})$ queries and $B_2$ requires $\Omega(\frac{md\log(n)}{C_{\epsilon_{B_2}}})$ queries. $C_{\epsilon}: [0,1] \rightarrow [0,1]$ is a symmetric convex function with a minimum of 0 at $\epsilon=0.5$ and maxima of 1 at $\epsilon=0,1$. Thus, if f $|0.5-\epsilon_{B_1}| < |0.5-\epsilon_{B_2}|$, then $\Omega(\frac{md\log(n)}{C_{\epsilon_{B_1}}})>\Omega(\frac{md\log(n)}{C_{\epsilon_{B_2}}})$ .
\end{proof}

    \citet{SucholutskyRISS} showed that commonly used supervision signals, like hard and soft classification labels, can be converted into sets of triplets. Reducing the number of required triplets to learn the location of a new object is equivalent to reducing the number of required labels. 
    \textbf{It follows from Theorem~\ref{thm:FSL} that high or low alignment leads to high few-shot learning performance on downstream tasks like classification} (where learning a new class can be formulated as learning the location of the centroid or boundaries of that class using a small number of labels), \textbf{while medium alignment leads to low few-shot learning performance}. Intuitively, this comes as a result of mutual information between the teacher and student representations being minimized when alignment is at $0.5$. In other words, if Bob knows that most bits coming from Alice are flipped (i.e. $D_{P}(A,B)=\epsilon>0.5$), then going forward, Bob can flip all the bits coming from Alice to achieve a lower error rate (i.e. $D_{P}(\bar A,B)=1-\epsilon<0.5$).

    We note that generalizing under domain-shift can be considered a form of zero-shot transfer learning and selecting adversarial examples can be seen as selecting objects that maximize representational disagreement between a model and a human. Under these interpretations, our framework also predicts that highly-aligned models are robust to both domain shift and adversarial examples though the framing in these cases is less intuitive than for the few-shot learning case. We share some preliminary theoretical analyses of robustness properties in the Supplement.

\section{Experiments}

Our theoretical analysis shows how representational alignment can serve as a source of inductive bias that reduces the number of bits that need to be acquired from data. In particular, Theorem~\ref{thm:FSL} predicts an unexpected U-shaped relationship between alignment and few-shot learning performance. We now present a series of experiments designed to test this prediction and examine whether it extends to robustness to natural adversarial examples and domain shift. 

\textbf{Models.}
The pre-trained models evaluated in this paper are taken from the PyTorch Image Models package~\citep{rw2019timm}. The full list of 491 models used in our experiments can be found in the Supplement.

\textbf{Data.}
All of the models used in this paper were pre-trained on ImageNet-1k~\citep{russakovsky2015imagenet} and had their performance evaluated on the ImageNet-1k validation set.  Adversarial robustness was measured on the ImageNet-A  dataset of natural adversarial examples that have been found to foil most ImageNet models~\citep{imageneta}. Zero-shot domain shift robustness was measured on the ImageNet-R dataset of renditions (e.g., paintings, toys, statues, etc.) of ImageNet classes~\citep{imagenetr} and the ImageNet-Sketch dataset of black-and-white sketches of the ImageNet classes~\citep{imagenets}.  All ImageNet and ImageNet variant results come from the PyTorch Image Models package~\citep{rw2019timm}.  Few-shot transfer learning performance was evaluated on the CIFAR100~\citep{cifar100} test set with $n \in \{1, 5, 10, 20, 40, 80\}$ examples per class used for few-shot learning and the remaining examples used for evaluation.  We measure alignment by computing the three metrics described below on the six image datasets from~\citet{peterson2018evaluating} and their respective sets of similarity judgments consisting of over 425,000  judgments across 1200 participants, as described by~\citet{marjieh2022predicting}. We average each alignment metric over the six datasets. 

\textbf{Alignment metrics.}
The representational alignment metrics we consider are correlation over pairwise similarity judgments and agreement between similarity triplets (i.e., the proportion of triplets that have the same response for the matched sets of representations). Similarity triplets provide non-metric information since in each query we discard the actual pairwise distances and only retain which one is larger. Analogously, in the pairwise case, if we use Spearman rank correlation then we are comparing the relative magnitudes of pairwise distances (of which the triplets form a subset) and discarding the magnitudes. On the other hand, Pearson correlation over pairwise similarities takes into account the magnitudes of the pairwise similarity judgments which, if accurate, can provide more information about fine-grained differences that affect alignment. We use all three metrics in our analyses and refer to them as Spearman pairwise alignment, Pearson pairwise alignment, and triplet alignment.

\textbf{Few-shot learning.}
We measure few-shot transfer learning performance using three methods: linear probing and two classifier heads. For linear probing, we take the embeddings coming from the penultimate layer of a model and fit a logistic regression using `scikit-learn’~\citep{scikit-learn}. For the classifier heads, we use a one- and two-hidden layer neural network implemented in PyTorch~\citep{NEURIPS2019_9015}, both with an input dropout rate of 0.8 and the latter with a ReLU hidden layer. Recent work~\citep{kumar2022finetuning} has shown that linear probing usually performs on par with fine-tuning and is a more accurate measure of the quality of internal representations that fine-tuning distorts. The heads are provided in the supplement and fixed hyperparameter values (selected based on exploratory experiments to confirm stable convergence) were used for every dataset/model for consistency.

\textbf{Correcting correlation for ImageNet performance.}
For each of our experiments, ImageNet-1k performance can be a confounding variable when trying to understand the relationship between alignment and a downstream property. To account for this, when measuring correlation between alignment and other properties, we compute partial correlation with ImageNet-1k Top-1 validation accuracy as a covariate using the `Pingouin' Python package~\citep{Vallat2018}.

\subsection{Results}

\textbf{Alignment predicts few-shot learning performance.}
For each alignment metric, we compare the $n$-shot transfer learning performance of the five models with the highest alignment, the five models with the lowest alignment, and the five models with the nearest-to-the-mean alignment as shown in Figure~\ref{fig:FSL}. We find that according to all three alignment metrics, the most aligned models have far better $n$-shot learning performance at all levels of $n$.
\begin{figure}[t!]
    \centering
    \includegraphics[width=0.49\linewidth]{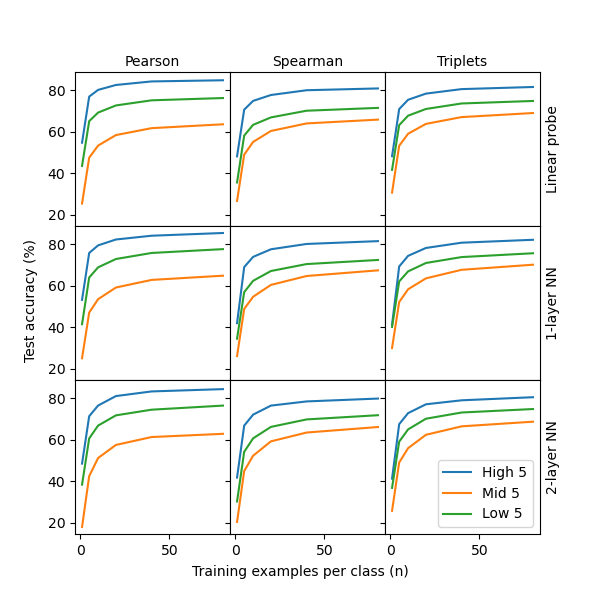}
    \includegraphics[width=0.49\linewidth]{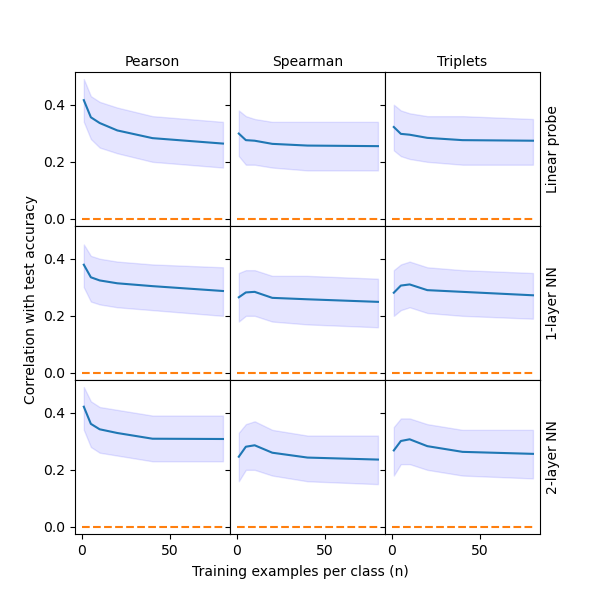}
    \vspace{-2mm}
    \caption{\textbf{Left}: Average $n$-shot transfer learning performance using linear probing, a 1-layer classification head, and a 2-layer classification head on CIFAR100 of the five models with highest, lowest, and closest-to-the-mean levels for each of Pearson ($\rho_P$), Spearman ($\rho_S$), and triplet alignment. \textbf{Right}: Pearson correlation (with 95\% confidence intervals) between $n$-shot transfer learning performance using each classification head on CIFAR100 and each $z^2$ alignment metric.\vspace{-6mm}}
    \label{fig:FSL}
\end{figure}
\begin{table}[t!]
\begin{minipage}[t]{.45\linewidth}
\caption{Average top-1 and top-5 accuracy on ImageNet-A, ImageNet-S, and ImageNet-R of the five models with highest, lowest, and closest-to-the-mean levels of Pearson ($\rho_P$), Spearman ($\rho_S$), and triplet alignment.}
\centering
\label{tab:adversarial}
\begin{center}
\begin{small}
\begin{tabular}{@{}llccc@{}}
\toprule
& &\multicolumn{3}{c}{\textbf{Alignment metrics}}\\
                                                                                            &        & $\mathbf{\rho_p}$ & $\mathbf{\rho_s}$ & Triplet \\ \midrule
\multirow{3}{*}{\textbf{\begin{tabular}[c]{@{}l@{}}IN-A \\ (Top 1)\end{tabular}}} & High 5 & 59.93             & 46.83             & 44.56             \\
                                                                                            & Mid 5  & 10.26             & 12.30             & 18.35             \\
                                                                                            & Low 5  & 30.89             & 21.80             & 29.99             \\ \cmidrule(l){2-5} 
\multirow{3}{*}{\textbf{\begin{tabular}[c]{@{}l@{}}IN-A \\ (Top 5)\end{tabular}}} & High 5 & 84.92             & 73.86             & 72.20             \\
                                                                                            & Mid 5  & 36.54             & 38.14             & 46.63             \\
                                                                                            & Low 5  & 59.10             & 49.36             & 57.70             \\ \midrule
\multirow{3}{*}{\textbf{\begin{tabular}[c]{@{}l@{}}IN-R \\ (Top 1)\end{tabular}}} & High 5 & 61.57             & 53.78             & 54.27             \\
                                                                                            & Mid 5  & 39.16             & 40.01             & 41.80              \\
                                                                                            & Low 5  & 51.33             & 46.76             & 50.16             \\ \cmidrule(l){2-5} 
\multirow{3}{*}{\textbf{\begin{tabular}[c]{@{}l@{}}IN-R \\ (Top 5)\end{tabular}}} & High 5 & 75.92             & 68.61             & 68.82             \\
                                                                                            & Mid 5  & 55.08             & 56.81             & 57.67             \\
                                                                                            & Low 5  & 66.55             & 61.84             & 65.22             \\ \midrule
\multirow{3}{*}{\textbf{\begin{tabular}[c]{@{}l@{}}IN-S\\ (Top 1)\end{tabular}}}  & High 5 & 48.06             & 40.37             & 40.31             \\
                                                                                            & Mid 5  & 26.90              & 28.02             & 29.85             \\
                                                                                            & Low 5  & 38.11             & 33.90              & 37.04             \\ \cmidrule(l){2-5} 
\multirow{3}{*}{\textbf{\begin{tabular}[c]{@{}l@{}}IN-S \\ (Top 5)\end{tabular}}} & High 5 & 71.10              & 62.45             & 62.24             \\
                                                                                            & Mid 5  & 44.90              & 46.67             & 48.58             \\
                                                                                            & Low 5  & 58.91             & 53.43             & 57.57             \\ \bottomrule
\end{tabular}
\end{small}
\end{center}
\end{minipage}\hfill
\begin{minipage}[t]{.45\linewidth}
\centering
\caption{Pearson correlation between Top-1 and Top-5 accuracy on ImageNet-A, ImageNet-R, and ImageNet-Sketch (-S) and Pearson ($\rho_P$), Spearman ($\rho_S$), and triplet $z^2$ alignment metrics.}
\label{tab:zcorr}
\begin{center}
\begin{small}
\begin{tabular}{@{}lcll@{}}
\toprule
& \multicolumn{3}{c}{\textbf{Alignment metrics}}\\                                                                          & $\mathbf{\rho_p}$                                         & \multicolumn{1}{c}{$\mathbf{\rho_s}$}                     & \multicolumn{1}{c}{Triplet}                     \\ \midrule
\textbf{\begin{tabular}[c]{@{}l@{}}IN-A\\ (Top 1)\end{tabular}} & \begin{tabular}[c]{@{}c@{}}0.239\\ (\textit{p=0.000})\end{tabular} & \begin{tabular}[c]{@{}l@{}}0.151\\ (\textit{p=0.001})\end{tabular} & \begin{tabular}[c]{@{}l@{}}0.157\\ (\textit{p=0.000})\end{tabular} \\ \cmidrule(l){2-4} 
\textbf{\begin{tabular}[c]{@{}l@{}}IN-A\\ (Top 5)\end{tabular}} & \begin{tabular}[c]{@{}c@{}}0.210\\ (\textit{p=0.000})\end{tabular} & \begin{tabular}[c]{@{}l@{}}0.112\\ (\textit{p=0.013})\end{tabular} & \begin{tabular}[c]{@{}l@{}}0.121\\ (\textit{p=0.007})\end{tabular} \\ \midrule
\textbf{\begin{tabular}[c]{@{}l@{}}IN-R\\ (Top 1)\end{tabular}} & \begin{tabular}[c]{@{}c@{}}0.166\\ (\textit{p=0.000})\end{tabular} & \begin{tabular}[c]{@{}l@{}}0.164\\ (\textit{p=0.000})\end{tabular} & \begin{tabular}[c]{@{}l@{}}0.182\\ (\textit{p=0.000})\end{tabular} \\ \cmidrule(l){2-4} 
\textbf{\begin{tabular}[c]{@{}l@{}}IN-R\\ (Top 5)\end{tabular}} & \begin{tabular}[c]{@{}c@{}}0.173\\ (\textit{p=0.000})\end{tabular} & \begin{tabular}[c]{@{}l@{}}0.170\\ (\textit{p=0.000})\end{tabular} & \begin{tabular}[c]{@{}l@{}}0.194\\ (\textit{p=0.000})\end{tabular} \\ \midrule
\textbf{\begin{tabular}[c]{@{}l@{}}IN-S\\ (Top 1)\end{tabular}} & \begin{tabular}[c]{@{}c@{}}0.191\\ (\textit{p=0.000})\end{tabular} & \begin{tabular}[c]{@{}l@{}}0.164\\ (\textit{p=0.000})\end{tabular} & \begin{tabular}[c]{@{}l@{}}0.179\\ (\textit{p=0.000})\end{tabular} \\ \cmidrule(l){2-4} 
\textbf{\begin{tabular}[c]{@{}l@{}}IN-S\\ (Top 5)\end{tabular}} & \begin{tabular}[c]{@{}c@{}}0.211\\ (\textit{p=0.000})\end{tabular} & \begin{tabular}[c]{@{}l@{}}0.197\\ (\textit{p=0.000})\end{tabular} & \begin{tabular}[c]{@{}l@{}}0.218\\ (\textit{p=0.000})\end{tabular} \\ \bottomrule
\end{tabular}
\end{small}
\end{center}
\end{minipage}
\vspace{-4mm}
\end{table}

\textbf{Alignment predicts adversarial robustness.}
For each alignment metric, we also compare the performance on natural adversarial examples of the five models with the highest alignment, the five models with the lowest alignment, and the five models with the nearest-to-the-mean alignment as seen in Table~\ref{tab:adversarial}. We again find that according to all three alignment metrics, the most aligned models perform far better on both Top-1 and Top-5 predictions for ImageNet-A.

\textbf{Alignment predicts domain-shift robustness.}
Finally, for each alignment metric, we also compare the domain-shift robustness of the five models with the highest alignment, the five models with the lowest alignment, and the five models with the nearest-to-the-mean alignment as seen in Table~\ref{tab:adversarial}. We find that for all three alignment metrics, the most aligned models perform far better on both Top-1 and Top-5 predictions for both ImageNet-R and ImageNet-S.  

\textbf{Alignment has non-linear relationships with downstream properties.}
Based on the results in Table~\ref{tab:adversarial} and  Figure~\ref{fig:FSL}, while models with low alignment underperform models with high alignment, they seem to consistently outperform models with medium alignment. Furthermore, %
when looking at the entire set of models in Figure~\ref{fig:FSL_full} (Left), there appears to be a non-linear, potentially quadratic relationship between alignment and few-shot learning performance. This matches the U-shaped behavior predicted by Theorem~\ref{thm:FSL}. To test this relationship, we measure the correlations between the downstream properties of interest and the $z^2$ transformations of each alignment metric where $z^2(x_i) = (\frac{x_i-\mu}{\sigma})^2$, $\mu$ is the mean, and $\sigma$ is the standard deviation. We find statistically significant, positive correlations between the $z^2$ alignment metrics and few-shot transfer learning performance as seen in Figure~\ref{fig:FSL} (Right). Using $z^2$ alignment metrics also results in slightly stronger correlations for domain-shift robustness but weaker correlations for adversarial robustness as seen in Table~\ref{tab:zcorr}.
\begin{figure}[t!]
    \centering
    
    \includegraphics[width=0.42\linewidth]{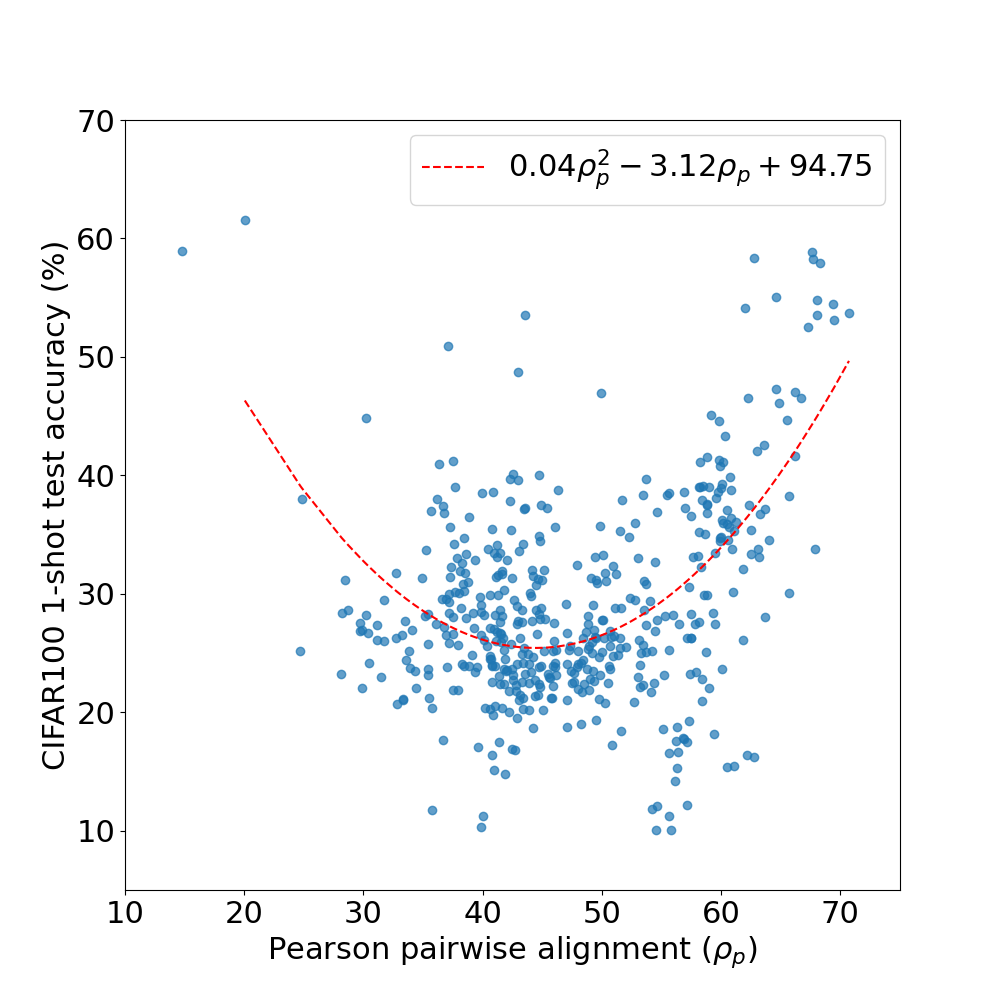}
    \includegraphics[width=0.42\linewidth]{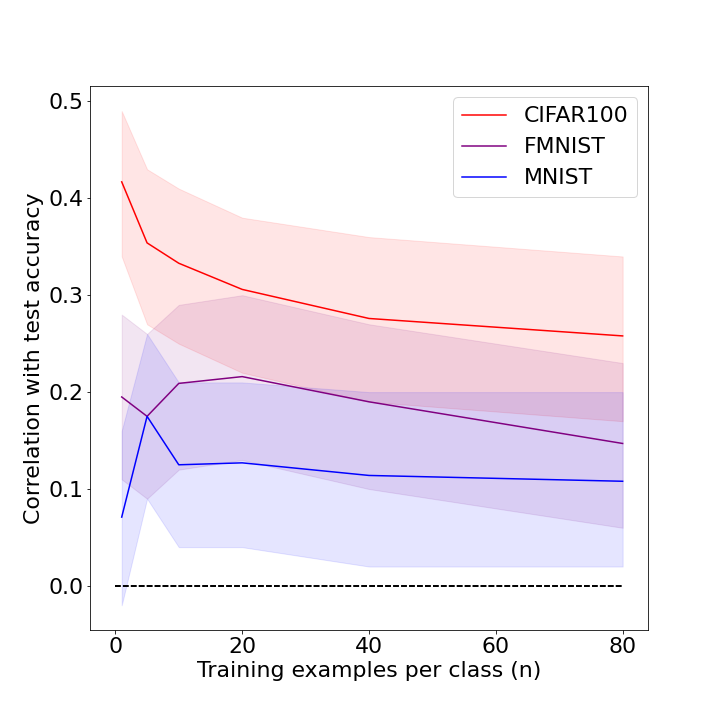}
    \vspace{-2mm}
    \caption{\textbf{Left}: Comparing 1-shot learning performance on CIFAR100 to Pearson pairwise alignment for all 491 models. \textbf{Right}: Pearson correlation (with 95\% confidence intervals) between $n$-shot transfer learning performance using linear probing and the Pearson ($\rho_P$) $z^2$ alignment metric for CIFAR100, FMNIST, and MNIST.\vspace{-6mm}}
    \label{fig:FSL_full}
\end{figure}
\paragraph{Alignment is a domain-specific metric.}
Since CIFAR100 is made up of naturalistic object images and so were the pre-training dataset and alignment evaluation datasets, we now test few-shot transfer learning performance on the MNIST~\citep{mnist} and FMNIST~\citep{fmnist} datasets with $n \in \{1, 5, 10, 20, 40, 80\}$ examples per class to evaluate how alignment generalizes to datasets that look very different than the ones on which it was measured. FMNIST is a dataset of grayscale clothing item images. While fairly different, it is still made up of everyday objects as are the datasets on which we measured alignment. Meanwhile, MNIST is a dataset of grayscale handwritten digit images and is completely different from the datasets on which measured alignment. Thus, CIFAR100 is most similar to the datasets on which alignment was measured, followed by FMNIST and then MNIST. 
As shown in Figure~\ref{fig:FSL_full} (Right), for all three datasets, we find statistically significant correlations between our Pearson alignment $z^2$ metric and few-shot transfer learning performance. However, we find that the effect size decreases for FMNIST and even more so for MNIST. This suggests that like any other metric, there are limits to how well representational alignment can generalize, and that it has the strongest predictive power when measured on datasets similar to those on which downstream properties are being evaluated.
\paragraph{Interactions of architecture and alignment.}
The results above suggest that when alignment is measured on near-distribution datasets, it can predict about $20\%$ of the variance in one-shot learning performance. This is impressive given the large number of factors we know have an impact on downstream performance (architecture, optimizer, random seed, initialization, early termination, hardware effects on stochasticity, and many other hyperparameters). We also investigated whether some of these factors can predict alignment or interact with it to affect its relationship with downstream performance. Going through the results of our 491 models (see the Supplemental Information for the full results), we find the following insights ranked from most supporting evidence to least: 1) knowledge distillation\citep{hinton2015distilling} seems to increase both alignment and few-shot learning performance; 2) increasing depth/size tends to increase at least one of alignment and FSL performance (increasing ResNet\citep{resnet} depth improves alignment but not FSL performance increasing depth for Resnet-RS\citep{resnetrs} increases few-shot learning performance and not alignment; increasing Swin Transformer~\citep{liu2021swin} size increases both); 3) pre-training on ImageNet21k/22k before fine-tuning on ImageNet1k increases both alignment and few-shot learning performance; 4) batch-normalization~\citep{ioffe2015batch} for VGG~\citep{vgg} leads to lower alignment but slightly higher FSL performance; 5) pruning EfficientNet~\citep{tan2019efficientnet} seems to slightly decrease alignment but maintain FSL performance. We also find that for many architectures, the models cluster together (i.e. almost all MobileNets~\citep{howard2017mobilenets} have low-to-medium alignment and low FSL performance, almost all NFNets~\citep{nfnets} have high alignment and medium-to-high FSL performance, all BEiT models~\citep{bao2022beit} have very low alignment and high FSL performance). These findings suggest that architecture and training decisions can impact alignment as well as its effect on downstream properties. Thus researchers may be able to directly optimize for alignment when selecting, designing, or optimizing model architectures. 

\section{Discussion}
Our experimental results confirm our theoretical predictions -- not only do highly-aligned models exhibit better few-shot learning and robustness properties than models with lower alignment, but we also observe the U-shaped relationship with alignment predicted in Theorem~\ref{thm:FSL}. We now dissect the theoretical and empirical results via three key questions. 

\textbf{Which alignment metric should we be using?}
We find that Spearman pairwise alignment is almost perfectly correlated with triplet alignment ($\rho = 0.992$). This suggests that triplets, though only a small fraction of the full set of quadruplets (i.e., all queries of the form ``X is closer to Y than A is to B''). , already capture the majority of the non-metric information contained in the entire set of quadruplets. Pearson pairwise alignment has a strong, though not quite as strong, correlation with triplet alignment ($\rho=0.824$). While all three metrics have statistically significant correlations with the downstream properties we care about, Pearson pairwise alignment seems to have the strongest correlations. This suggests that there is recoverable metric information about representations encoded in the magnitudes of human similarity judgments, even when these judgments are potentially noisy due to being elicited without anchor stimuli that would ground the scale. The information-theoretic representation learning framework would need to be extended in future work to quantify this additional information.

\textbf{Which positive downstream properties do human-aligned models exhibit?}
As predicted by our information-theoretic representation learning framework, our experiments suggest that very human-aligned models are better at few-shot transfer learning, more robust to natural adversarial examples, and more robust to test-time domain shift than models with lower degrees of alignment. The correlations between alignment and each downstream property were positive and statistically significant and, in every experiment we conducted, the models with the highest level of alignment outperformed the other models. These results seem to confirm the intuition that human alignment is useful in tasks where we want to use human supervision to elicit human-like behavior. 
While the models with the highest level of alignment clearly exhibit the best downstream performance across all three sets of tasks, our results suggest an additional unexpected insight: there is a U-shaped relationship between alignment and two of the properties we test: robustness to domain shift and few-shot transfer learning performance.    %
Thus, while a high degree of alignment may be sufficient for eliciting desirable properties in models, it does not appear to be necessary. In fact, in cases where achieving high alignment is impractical (e.g., due to limitations on human-labeled data), it is possible that better results may be achieved by avoiding alignment altogether.

\textbf{Are there downsides and limitations to representational alignment?}
\label{sec:alignment_limitations}
We have shown that alignment is not a domain-agnostic metric and that its downstream effects are strongest when evaluated on in-distribution datasets.
It is also clear that increasing alignment can damage performance across multiple criteria when that increase moves the alignment level into the medium range.  But are there downsides to increasing alignment of models that are already at or past that range? Throughout this study, one of our key assumptions was that the task being solved is designed and specified by humans, or at least easily solvable by humans. However, there are numerous domains where humans have poor performance or where our representations of the problem or stimuli are not helpful for solving the task and a different set of inductive biases are required. For example, many domains targeted by deep learning -- such as protein folding, drug design, and social network analysis -- require geometric inductive biases~\citep{bronstein2021geometric}. In these cases, the goal should be to achieve alignment with the underlying laws governing the system of interest (e.g., physical forces or mathematical laws), rather than with humans. Finally, we note that as in all cases where models are trained with human data, social biases found in the human data may be reflected or even amplified in representationally-aligned models.

\section{Conclusion}
Our findings confirm the intuition that representational alignment with humans elicits desirable human-like traits in models, including the ability to generalize from small data and lower susceptibility to adversarial attacks and domain shift. However, as both our theory and experiments suggest, increasing alignment is not always desirable, first due to a U-shaped relationship between alignment and desirable traits, and second, because there are domains where human representations simply are not useful. Notably, our discovery of the U-shaped relationship serves to resolve the tension between previously conflicting findings regarding whether alignment improves performance. We hope that our framework and results motivate further study into both the positive and negative consequences of aligning models with humans across diverse domains. We believe that representational alignment is a quantifiable and tangible route for making progress on the general AI alignment problem by allowing us to measure agreement between models and humans even over abstract domains.

\textbf{Limitations and broader impact.} 
We discussed the limitations and potential negative impacts of \textit{alignment} in Section~\ref{sec:alignment_limitations}, but we also want to note the potential limitations and impacts of our \textit{study of alignment}. First, we note that our approach assumes access to internal representations of models, but many new models today are being released via closed-source APIs that provide access only to model outputs. We hope that this and future studies on representational alignment will encourage developers to provide more open access to model internals. Second, we note that our experiments only evaluated alignment with the aggregate representational space of a large group of people which may not actually be individually representative of any of those people. Our proposed theory allows for alignment with either individuals or groups, but it is difficult to collect a large set of similarity judgments from individual participants. To ensure that we can understand whose voices, biases, and beliefs our models are actually aligned with, we believe that finding a way to account for individual differences is an important future direction.

\section*{Acknowledgements} We would like to thank Lukas Muttenthaler for excellent discussions that helped shape some of the ideas explored in this paper. This work was supported by an ONR grant (N00014-18-1-2873) to TLG and an NSERC fellowship (567554-2022) to IS.
\bibliography{main}

\newpage
\appendix
\onecolumn
\section{Additional theoretical results}
\begin{definition}
    Consider model $A$ with input space $X \subseteq \mathbbm{R}^{nd}$, previously observed data $x\sim X$, and $k$ class centroids $c\in\mathbbm{R}^{kd}$ learned by $A$. We define \textbf{domain shift} as an update to the class centroids $c \rightarrow c^*\in\mathbbm{R}^{kd}$. \textbf{Domain shift sensitivity} is then the proportion of triplets flipped as a result of this update. $$\sigma_A(c,c^*):=E[\frac{||S_A(x;c)-S_A(x;c^*)||_1}{|S_A(x;c)|}]$$

\end{definition}

From this definition and Theorem~\ref{thm:FSL}, it immediately follows that sensitivity to domain shift should have the same U-shaped relationship with alignment that few-shot learning does in cases where the teacher model is robust to domain shift. 

\begin{corollary}\textbf{(Alignment and domain-shift robustness).}
    Consider input space $X \subseteq \mathbbm{R}^{nd}$, shared data $x\sim X$, and three models, $A$, $B_1$, and $B_2$ with $D_{P}(A,B_1;X)=\epsilon_{B_1}$ and $D_{P}(A,B_2;X)=\epsilon_{B_2}$. Let $c\in\mathbbm{R}^{kd}$ be $k$ class centroids learned by $A$, $B_1$ and $B_2$. If $\sigma_A(c,c^*)=0$ and $|0.5-\epsilon_{B_1}| < |0.5-\epsilon_{B_2}|$, then $\sigma_{B_1}(c,c^*)<\sigma_{B_2}(c,c^*)$.
\end{corollary}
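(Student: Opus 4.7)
The plan is to mimic the binary-symmetric-channel (BSC) reduction used in Proposition~\ref{thm:learn-eps} and apply it to a flipping-count problem rather than a communication problem. By linearity of expectation, $\sigma_{B_i}$ factors into a constant times the probability that a uniformly random centroid-involving triplet $t$ has $T_{B_i}(t;c)\neq T_{B_i}(t;c^*)$; triplets not involving any centroid cannot flip under a centroid-only update and so contribute nothing. This reduces the corollary to a single per-triplet probability computation, which makes the problem tractable using information-theoretic ingredients already developed in the paper.

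The key steps would be: (i) model the map $T_A\mapsto T_{B_i}$ as an independent BSC with error $\epsilon_{B_i}$ at each evaluation of a triplet, exactly as in the proof of Proposition~\ref{thm:learn-eps}; (ii) use the hypothesis $\sigma_A(c,c^*)=0$ to conclude $T_A(t;c)=T_A(t;c^*)$ for every centroid-involving $t$, so that the indicator $\mathbbm{1}[T_{B_i}(t;c)\neq T_{B_i}(t;c^*)]$ collapses to the XOR of two $\mathrm{Ber}(\epsilon_{B_i})$ bit-flips; (iii) evaluate this XOR probability to obtain a per-triplet flip rate of $2\epsilon_{B_i}(1-\epsilon_{B_i})$; and (iv) compare $\sigma_{B_1}$ and $\sigma_{B_2}$ via the monotonicity of $g(\epsilon)=2\epsilon(1-\epsilon)$ with respect to $|0.5-\epsilon|$.

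The hard part is step (iv). The function $g$ is \emph{strictly decreasing} in $|0.5-\epsilon|$ on $[0,1]$, which matches the ``same U-shape as FSL'' language preceding the corollary (sensitivity low at the alignment and anti-alignment extremes, maximal at $\epsilon=1/2$) but is the opposite of the $<$ direction in the statement. The BSC argument therefore delivers $\sigma_{B_1}>\sigma_{B_2}$ whenever $|0.5-\epsilon_{B_1}|<|0.5-\epsilon_{B_2}|$. To prove the corollary exactly as worded I would need a genuinely different modelling choice --- for instance coupling the two XOR bits rather than taking them independent, or redefining sensitivity as the disagreement between $S_{B_i}(x;c^*)$ and $S_A(x;c)$ so that the per-triplet rate is the monotone $\epsilon_{B_i}$ rather than $2\epsilon_{B_i}(1-\epsilon_{B_i})$ --- and neither alternative is supported by the current definitions. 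The cleanest honest resolution I would adopt is to flag the stated $<$ as a sign error; with the direction reversed, the BSC sketch above goes through cleanly and realizes the intended U-shape.
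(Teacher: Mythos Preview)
The paper provides no proof for this corollary: the only justification is the sentence immediately preceding it, which asserts that ``From this definition and Theorem~\ref{thm:FSL}, it immediately follows that sensitivity to domain shift should have the same U-shaped relationship with alignment that few-shot learning does in cases where the teacher model is robust to domain shift.'' Your BSC reduction is therefore already more detailed than anything the paper offers, and it stays faithfully within the paper's own modelling choice (independent bit-flips with rate $\epsilon$, exactly as in Proposition~\ref{thm:learn-eps}).

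Your diagnosis of a sign error is consistent with the paper's own surrounding text. In Theorem~\ref{thm:FSL}, the hypothesis $|0.5-\epsilon_{B_1}|<|0.5-\epsilon_{B_2}|$ makes $B_1$ the \emph{worse} learner (more queries). The sentence before the corollary says sensitivity inherits ``the same U-shaped relationship,'' so by the paper's own analogy $B_1$ (closer to $\epsilon=0.5$) should have \emph{higher} sensitivity, i.e.\ $\sigma_{B_1}(c,c^*)>\sigma_{B_2}(c,c^*)$. Your per-triplet rate $g(\epsilon)=2\epsilon(1-\epsilon)$, maximised at $\epsilon=1/2$, gives exactly this. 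The printed inequality $\sigma_{B_1}<\sigma_{B_2}$ contradicts both your computation and the paper's one-line justification via Theorem~\ref{thm:FSL}; with the inequality reversed, your sketch is the argument the paper gestures at but does not write out.
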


We can also use this framework to define robustness to adversarial examples. We assume that an adversarial example is an object that maximizes perceptual (i.e. representational) disagreement between the teacher and the student. 

\begin{definition}
    Consider input space $X \subseteq \mathbbm{R}^{nd}$, shared data $x\sim X$, and two models, $A$ and $B$, with $D_{P}(A,B_1;X)=\epsilon_{B}$. An \textbf{adversarial example} is an object $e\in\mathbb{R}^{d}$ that maximizes disagreement between $A$ and $B$ on $S(x;e)$, the subset of $n(n-1)/2$ triplets relating the objects in $x$ to $e$. 
    \begin{equation}
        e=\max_X ||S_A(x;e)-S_B(x;e)||_1
    \end{equation}
\end{definition}
Using Definition~\ref{defn:SRA} we immediately get the following result.
\begin{lemma}
Consider an input space $X\subseteq\mathbbm{R}_{nd}$, and two agents, $A$ and $B$. $D_{P}(A,B;X)=E[\frac{||S_A(X)-S_B(X)||_1}{n(n-1)(n-2)/2}]$.
\end{lemma}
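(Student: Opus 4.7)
The plan is to reduce the right-hand side to $D_P(A,B;X)$ by rewriting the $L_1$ norm as a sum of disagreement indicators, applying linearity of expectation, and then invoking an exchangeability argument. I treat Definition~\ref{defn:SRA} as intending \emph{disagreement} (not equality) between the two triplet indicators; this reading is forced by the statement of the lemma, since the right-hand side is manifestly a normalized Hamming distance and the quantity is called ``misalignment.''

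First I would invoke the Remark following Definition~2 to view $S_A(x), S_B(x) \in \{0,1\}^N$ with $N := n(n-1)(n-2)/2$, so that
\[
\|S_A(x) - S_B(x)\|_1 \;=\; \sum_{(i,j,k) \in U} \mathbbm{1}\!\left\{\mathbbm{1}_{x\in r^A_{ijk}} \neq \mathbbm{1}_{x\in r^B_{ijk}}\right\},
\]
where $U$ enumerates the $N$ distinct ordered triplet indices. Dividing by $N$ and pulling the expectation over $x \sim X$ inside the sum by linearity gives
\[
E\!\left[\frac{\|S_A(X)-S_B(X)\|_1}{N}\right] \;=\; \frac{1}{N} \sum_{(i,j,k)\in U} P_{x \sim X}\!\left(\mathbbm{1}_{x\in r^A_{ijk}} \neq \mathbbm{1}_{x\in r^B_{ijk}}\right).
\]

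Next I would argue that each summand equals $D_P(A,B;X)$. By Definition~\ref{defn:SRA}, $D_P(A,B;X)$ is the probability of disagreement on a single triple of points drawn uniformly from $X$. Under the natural assumption that the $n$ coordinates of a sample from $X$ are exchangeable (or i.i.d.\ from a common marginal on $\mathbbm{R}^d$), the joint law of $(x_i,x_j,x_k)$ does not depend on the particular choice of distinct indices $(i,j,k)$, so every term in the sum equals $D_P(A,B;X)$. Averaging $N$ identical values yields the claim.

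The main obstacle is reconciling the two sampling models: the right-hand side integrates over an $n$-tuple $x$ and averages uniformly over index-triples, whereas the left-hand side samples a single triple of points from $X$. Exchangeability across the coordinates of $x$ (implicit in treating $X$ as a distribution over unordered collections of points) makes the two match; without it, one should reinterpret $D_P$ as the joint expectation over $x \sim X$ \emph{and} a uniformly random index-triple $(i,j,k)$, in which case the lemma collapses to a direct Fubini/linearity-of-expectation identity. Beyond this definitional bookkeeping, the argument itself is a short unpacking of the two definitions, so no nontrivial estimates are required.
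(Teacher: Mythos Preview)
Your proposal is correct and matches the paper's approach: the paper does not give a proof at all, stating only that the lemma follows ``immediately'' from Definition~\ref{defn:SRA}, and your unpacking via linearity of expectation over the $N=n(n-1)(n-2)/2$ triplet indicators together with exchangeability is exactly the computation that makes that immediacy precise. Your observation that Definition~\ref{defn:SRA} must be read as a \emph{disagreement} probability (despite the ``$=$'' in the displayed formula) is also correct and is the only sensible reading given how $D_P$ is used throughout the paper.
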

We can now show that a model that is more aligned with the teacher will, on average, also be more robust to adversarial examples. 
\begin{theorem}\textbf{(Alignment and adversarial robustness).}
    Consider input space $X \subseteq \mathbbm{R}^{nd}$, shared data $x\sim X$, and three models, $A$, $B_1$, and $B_2$ with $D_{P}(A,B_1;X)=\epsilon_{B_1}$ and $D_{P}(A,B_2;X)=\epsilon_{B_2}$. If $\epsilon_{B_1}<\epsilon_{B_2}$, then $E[\max_{e\in x} ||S_A(x;e)-S_{B_1}(x;e)||_1] < E[\max_{e \in X} ||S_A(x;e)-S_{B_2}(x;e)||_1]$.
\end{theorem}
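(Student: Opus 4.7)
The plan is to combine the preceding lemma, which expresses $\epsilon_B$ as the expected fraction of flipped triplets, with a stochastic dominance argument that lifts a comparison of expectations to a comparison of expected maxima.

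First, I would restrict the preceding lemma to the subset $S(x;e)$ of $n(n-1)/2$ triplets anchored at $e$. The same counting argument (linearity of expectation applied to indicator variables of per-triplet disagreement) gives $E\bigl[\|S_A(x;e) - S_B(x;e)\|_1\bigr] = \epsilon_B \cdot n(n-1)/2$ for each fixed $e$. Thus for every individual $e$, the expected disagreement is strictly greater for $B_2$ than for $B_1$, which already establishes a pointwise comparison; the remaining task is to push this comparison through the $\max_e$ operation.

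Next, I would leverage the binary symmetric channel interpretation used in the proof of Proposition~\ref{thm:learn-eps}: each triplet's flip indicator is Bernoulli with mean $\epsilon_B$. Under this model I would construct an explicit coupling between the flip processes of $B_1$ and $B_2$ in which every triplet flipped under $B_1$ is also flipped under $B_2$; such a coupling exists whenever $\epsilon_{B_1} < \epsilon_{B_2}$ since one can draw a uniform variable per triplet and threshold it at the two different levels. Under this coupling, $\|S_A(x;e) - S_{B_1}(x;e)\|_1 \le \|S_A(x;e) - S_{B_2}(x;e)\|_1$ holds pointwise in $e$ and $x$, so the inequality survives taking $\max_{e\in X}$ and then expectation. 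Strictness follows because with positive probability at least one triplet in some $S(x;e)$ is flipped by $B_2$ but not $B_1$.

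The main obstacle is justifying the independence of triplet flips required by the channel reduction. Real representation spaces produce correlated flip patterns because perturbing $f_B(e)$ simultaneously moves many of the triplets anchored at $e$, so the per-anchor counts are not simply independent Binomials. I would address this either by adopting the binary symmetric channel assumption explicitly, matching the framing used in Proposition~\ref{thm:learn-eps}, or by weakening the coupling requirement: it suffices that the joint distribution over flip indicators for $B_1$ is stochastically dominated by that for $B_2$, which can be arranged through an exchangeable coupling built from the shared latent randomness of the adversary's choice of $e$, and the $\max$ being monotone in each coordinate preserves the dominance in expectation.
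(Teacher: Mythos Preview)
Your argument is correct and rests on the same modeling assumption the paper uses---treating the per-triplet flip indicators as independent Bernoulli$(\epsilon_B)$ variables, so that each anchor's disagreement count is $\mathrm{Bin}\bigl(n(n-1)/2,\epsilon_B\bigr)$---but you reach the conclusion by a different mechanism. The paper invokes the tail-sum formula for the expected maximum of $n$ i.i.d.\ nonnegative integer variables, $E[X_{(n)}]=\sum_x\bigl(1-F(x)^n\bigr)$, and then observes that the Binomial CDF $F(x;N,p)$ is pointwise decreasing in $p$, whence the sum is larger for $\epsilon_{B_2}$. You instead build an explicit coupling (threshold a common uniform at the two levels) to obtain almost-sure pointwise dominance of the flip patterns, which any monotone functional---in particular $\max_e$ followed by expectation---preserves. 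Your route is a little more robust: it works for any monotone statistic, bypasses the order-statistic formula entirely, and, usefully, forces the independence hypothesis into the open rather than hiding it inside the distributional assertion ``$U\sim\mathrm{Bin}(n(n-1)/2,\epsilon_{B_1})$,'' which the paper states without comment. The paper's approach, on the other hand, yields an explicit closed form for the two expected maxima, which could be useful if one wanted a quantitative gap rather than a bare inequality.
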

\begin{proof}
    Note that for a set of $k$ binomial random variables $X_i\sim Bin(n,p)$, the expectation of the $k$-th order statistic is $E[X_{(k)}]=\sum_{x=0}^n(1-F(x;n,p)^k)$ where $F(x;n,p)=P(X_i\leq x)$. In the case of adversarial examples, let $X_i$ be a random variable corresponding to the set of objects sampled uniformly from the input space $X\subseteq \mathbbm{R}^{nd}$ then $U=||S_A(X;e)-S_{B_1}(X;e)||_1, Y\sim Bin(n(n-1)/2,\epsilon_{B_1})$ and similarly $V=||S_A(X;e)-S_{B_2}(X;e)||_1, V\sim Bin(n(n-1)/2,\epsilon_{B_1})$. In that case, the expected disagreement of $A$ and $B_1$ on an adversarial example is $E[U_{(n)}]=\sum_{x=0}^{n(n-1)/2}(1-F(x;n(n-1)/2,\epsilon_{B_1})^n)$ and for $A$ and $B_2$ it is $E[V_{(n)}]=\sum_{x=0}^{n(n-1)/2}(1-F(x;n(n-1)/2,\epsilon_{B_2})^n)$.  If $\epsilon_{B_1}<\epsilon_{B_2}$, then $F(x;n(n-1)/2,\epsilon_{B_1})>F(x;n(n-1)/2,\epsilon_{B_2})$ and thus $E[U_{(n)}]<E[V_{(n)}]$.
\end{proof}
\begin{remark}
    While this theorem shows that increased alignment generally leads to increased adversarial robustness, this relies on a representational metric of adversarial examples. However, in practice, adversarial robustness is often measured using hard classification error as a simple proxy. This proxy does not capture the fine-grained degree of misalignment between humans and a model on each example. As a result, when measuring adversarial robustness using this proxy, the effect of alignment may be dampened by the U-shaped effect seen in other classification settings as mentioned above. 
\end{remark}
\section{List of 491 models used in experiments}
adv\_inception\_v3, bat\_resnext26ts, beit\_base\_patch16\_224, beit\_base\_patch16\_384, beit\_large\_patch16\_224, beit\_large\_patch16\_384, botnet26t\_256, cait\_s24\_224, cait\_s24\_384, cait\_s36\_384, cait\_xs24\_384, cait\_xxs24\_224, cait\_xxs24\_384, cait\_xxs36\_224, cait\_xxs36\_384, coat\_lite\_mini, coat\_lite\_small, coat\_lite\_tiny, coat\_mini, coat\_tiny, convit\_base, convit\_small, convit\_tiny, convmixer\_1024\_20\_ks9\_p14, convmixer\_1536\_20, convmixer\_768\_32, convnext\_base, convnext\_base\_384\_in22ft1k, convnext\_base\_in22ft1k, convnext\_large, convnext\_large\_384\_in22ft1k, convnext\_large\_in22ft1k, convnext\_small, convnext\_tiny, cspdarknet53, cspresnet50, cspresnext50, deit\_base\_patch16\_224, deit\_base\_patch16\_384, deit\_small\_patch16\_224, deit\_tiny\_patch16\_224, densenet121, densenet161, densenet169, densenet201, densenetblur121d, dla102, dla102x, dla102x2, dla169, dla34, dla46\_c, dla46x\_c, dla60, dla60\_res2net, dla60\_res2next, dla60x, dla60x\_c, dm\_nfnet\_f0, dm\_nfnet\_f1, dm\_nfnet\_f2, dpn107, dpn131, dpn68, dpn68b, dpn92, dpn98, eca\_botnext26ts\_256, eca\_halonext26ts, eca\_nfnet\_l0, eca\_nfnet\_l1, eca\_nfnet\_l2, eca\_resnet33ts, eca\_resnext26ts, ecaresnet101d, ecaresnet101d\_pruned, ecaresnet269d, ecaresnet26t, ecaresnet50d, ecaresnet50d\_pruned, ecaresnet50t, ecaresnetlight, efficientnet\_b0, efficientnet\_b1, efficientnet\_b1\_pruned, efficientnet\_b2, efficientnet\_b2\_pruned, efficientnet\_b3, efficientnet\_b3\_pruned, efficientnet\_b4, efficientnet\_el, efficientnet\_el\_pruned, efficientnet\_em, efficientnet\_es, efficientnet\_es\_pruned, efficientnet\_lite0, efficientnetv2\_rw\_m, efficientnetv2\_rw\_s, efficientnetv2\_rw\_t, ens\_adv\_inception\_resnet\_v2, ese\_vovnet19b\_dw, ese\_vovnet39b, fbnetc\_100, fbnetv3\_b, fbnetv3\_d, fbnetv3\_g, gc\_efficientnetv2\_rw\_t, gcresnet33ts, gcresnet50t, gcresnext26ts, gcresnext50ts, gernet\_l, gernet\_m, gernet\_s, ghostnet\_100, gluon\_inception\_v3, gluon\_resnet101\_v1b, gluon\_resnet101\_v1c, gluon\_resnet101\_v1d, gluon\_resnet101\_v1s, gluon\_resnet152\_v1b, gluon\_resnet152\_v1c, gluon\_resnet152\_v1d, gluon\_resnet152\_v1s, gluon\_resnet18\_v1b, gluon\_resnet34\_v1b, gluon\_resnet50\_v1b, gluon\_resnet50\_v1c, gluon\_resnet50\_v1d, gluon\_resnet50\_v1s, gluon\_resnext101\_32x4d, gluon\_resnext101\_64x4d, gluon\_resnext50\_32x4d, gluon\_senet154, gluon\_seresnext101\_32x4d, gluon\_seresnext101\_64x4d, gluon\_seresnext50\_32x4d, gluon\_xception65, gmixer\_24\_224, gmlp\_s16\_224, halo2botnet50ts\_256, halonet26t, halonet50ts, haloregnetz\_b, hardcorenas\_a, hardcorenas\_b, hardcorenas\_c, hardcorenas\_d, hardcorenas\_e, hardcorenas\_f, hrnet\_w18, hrnet\_w18\_small, hrnet\_w18\_small\_v2, hrnet\_w30, hrnet\_w32, hrnet\_w40, hrnet\_w44, hrnet\_w48, hrnet\_w64, ig\_resnext101\_32x16d, ig\_resnext101\_32x8d, inception\_resnet\_v2, inception\_v3, inception\_v4, jx\_nest\_base, jx\_nest\_small, jx\_nest\_tiny, lambda\_resnet26rpt\_256, lambda\_resnet26t, lambda\_resnet50ts, lamhalobotnet50ts\_256, lcnet\_050, lcnet\_075, lcnet\_100, legacy\_senet154, legacy\_seresnet101, legacy\_seresnet152, legacy\_seresnet18, legacy\_seresnet34, legacy\_seresnet50, legacy\_seresnext101\_32x4d, legacy\_seresnext26\_32x4d, legacy\_seresnext50\_32x4d, mixer\_b16\_224, mixer\_b16\_224\_miil, mixnet\_l, mixnet\_m, mixnet\_s, mixnet\_xl, mnasnet\_100, mnasnet\_small, mobilenetv2\_050, mobilenetv2\_100, mobilenetv2\_110d, mobilenetv2\_120d, mobilenetv2\_140, mobilenetv3\_large\_100, mobilenetv3\_large\_100\_miil, mobilenetv3\_rw, nasnetalarge, nf\_regnet\_b1, nf\_resnet50, nfnet\_l0, pit\_b\_224, pit\_s\_224, pit\_ti\_224, pit\_xs\_224, pnasnet5large, regnetx\_002, regnetx\_004, regnetx\_006, regnetx\_008, regnetx\_016, regnetx\_032, regnetx\_040, regnetx\_064, regnetx\_080, regnetx\_120, regnetx\_160, regnetx\_320, regnety\_002, regnety\_004, regnety\_006, regnety\_008, regnety\_016, regnety\_032, regnety\_040, regnety\_064, regnety\_080, regnety\_120, regnety\_160, regnety\_320, regnetz\_b16, regnetz\_c16, regnetz\_d32, regnetz\_d8, regnetz\_e8, repvgg\_a2, repvgg\_b0, repvgg\_b1, repvgg\_b1g4, repvgg\_b2, repvgg\_b2g4, repvgg\_b3, repvgg\_b3g4, res2net101\_26w\_4s, res2net50\_14w\_8s, res2net50\_26w\_4s, res2net50\_26w\_6s, res2net50\_26w\_8s, res2net50\_48w\_2s, res2next50, resmlp\_12\_224, resmlp\_12\_distilled\_224, resmlp\_24\_224, resmlp\_24\_distilled\_224, resmlp\_36\_224, resmlp\_36\_distilled\_224, resmlp\_big\_24\_224, resmlp\_big\_24\_224\_in22ft1k, resmlp\_big\_24\_distilled\_224, resnest101e, resnest14d, resnest200e, resnest269e, resnest26d, resnest50d, resnest50d\_1s4x24d, resnest50d\_4s2x40d, resnet101, resnet101d, resnet152, resnet152d, resnet18, resnet18d, resnet200d, resnet26, resnet26d, resnet26t, resnet32ts, resnet33ts, resnet34, resnet34d, resnet50, resnet50\_gn, resnet50d, resnet51q, resnet61q, resnetblur50, resnetrs101, resnetrs152, resnetrs200, resnetrs270, resnetrs350, resnetrs420, resnetrs50, resnetv2\_101, resnetv2\_101x1\_bitm, resnetv2\_50, resnetv2\_50x1\_bit\_distilled, resnetv2\_50x1\_bitm, resnext101\_32x8d, resnext26ts, resnext50\_32x4d, resnext50d\_32x4d, rexnet\_100, rexnet\_130, rexnet\_150, rexnet\_200, sebotnet33ts\_256, sehalonet33ts, selecsls42b, selecsls60, selecsls60b, semnasnet\_075, semnasnet\_100, seresnet152d, seresnet33ts, seresnet50, seresnext26d\_32x4d, seresnext26t\_32x4d, seresnext26ts, seresnext50\_32x4d, skresnet18, skresnet34, skresnext50\_32x4d, spnasnet\_100, ssl\_resnet18, ssl\_resnet50, ssl\_resnext101\_32x16d, ssl\_resnext101\_32x4d, ssl\_resnext101\_32x8d, ssl\_resnext50\_32x4d, swin\_base\_patch4\_window12\_384, swin\_base\_patch4\_window7\_224, swin\_large\_patch4\_window12\_384, swin\_large\_patch4\_window7\_224, swin\_small\_patch4\_window7\_224, swin\_tiny\_patch4\_window7\_224, swsl\_resnet18, swsl\_resnet50, swsl\_resnext101\_32x16d, swsl\_resnext101\_32x4d, swsl\_resnext101\_32x8d, swsl\_resnext50\_32x4d, tf\_efficientnet\_b0, tf\_efficientnet\_b0\_ap, tf\_efficientnet\_b0\_ns, tf\_efficientnet\_b1, tf\_efficientnet\_b1\_ap, tf\_efficientnet\_b1\_ns, tf\_efficientnet\_b2, tf\_efficientnet\_b2\_ap, tf\_efficientnet\_b2\_ns, tf\_efficientnet\_b3, tf\_efficientnet\_b3\_ap, tf\_efficientnet\_b3\_ns, tf\_efficientnet\_b4, tf\_efficientnet\_b4\_ap, tf\_efficientnet\_b4\_ns, tf\_efficientnet\_b5, tf\_efficientnet\_b5\_ap, tf\_efficientnet\_b5\_ns, tf\_efficientnet\_b6, tf\_efficientnet\_b6\_ap, tf\_efficientnet\_b6\_ns, tf\_efficientnet\_b7, tf\_efficientnet\_b7\_ap, tf\_efficientnet\_b7\_ns, tf\_efficientnet\_cc\_b0\_4e, tf\_efficientnet\_cc\_b0\_8e, tf\_efficientnet\_cc\_b1\_8e, tf\_efficientnet\_el, tf\_efficientnet\_em, tf\_efficientnet\_es, tf\_efficientnet\_lite0, tf\_efficientnet\_lite1, tf\_efficientnet\_lite2, tf\_efficientnet\_lite3, tf\_efficientnet\_lite4, tf\_efficientnetv2\_b0, tf\_efficientnetv2\_b1, tf\_efficientnetv2\_b2, tf\_efficientnetv2\_b3, tf\_efficientnetv2\_l, tf\_efficientnetv2\_l\_in21ft1k, tf\_efficientnetv2\_m, tf\_efficientnetv2\_m\_in21ft1k, tf\_efficientnetv2\_s, tf\_efficientnetv2\_s\_in21ft1k, tf\_inception\_v3, tf\_mixnet\_l, tf\_mixnet\_m, tf\_mixnet\_s, tf\_mobilenetv3\_large\_075, tf\_mobilenetv3\_large\_100, tf\_mobilenetv3\_large\_minimal\_100, tf\_mobilenetv3\_small\_075, tf\_mobilenetv3\_small\_100, tf\_mobilenetv3\_small\_minimal\_100, tinynet\_a, tinynet\_b, tinynet\_c, tinynet\_d, tinynet\_e, tnt\_s\_patch16\_224, tv\_densenet121, tv\_resnet101, tv\_resnet152, tv\_resnet34, tv\_resnet50, tv\_resnext50\_32x4d, twins\_pcpvt\_base, twins\_pcpvt\_large, twins\_pcpvt\_small, twins\_svt\_base, twins\_svt\_large, twins\_svt\_small, vgg11, vgg11\_bn, vgg13, vgg13\_bn, vgg16, vgg16\_bn, vgg19, vgg19\_bn, visformer\_small, vit\_base\_patch16\_224, vit\_base\_patch16\_224\_miil, vit\_base\_patch16\_384, vit\_base\_patch32\_224, vit\_base\_patch32\_384, vit\_base\_patch8\_224, vit\_base\_r50\_s16\_384, vit\_small\_patch16\_224, vit\_small\_patch16\_384, vit\_small\_patch32\_224, vit\_small\_patch32\_384, vit\_small\_r26\_s32\_224, vit\_small\_r26\_s32\_384, vit\_tiny\_patch16\_224, vit\_tiny\_patch16\_384, vit\_tiny\_r\_s16\_p8\_224, vit\_tiny\_r\_s16\_p8\_384, wide\_resnet101\_2, wide\_resnet50\_2, xception, xception41, xception65, xception71, xcit\_large\_24\_p16\_224, xcit\_large\_24\_p16\_224\_dist, xcit\_large\_24\_p16\_384\_dist, xcit\_large\_24\_p8\_224, xcit\_large\_24\_p8\_224\_dist, xcit\_large\_24\_p8\_384\_dist, xcit\_medium\_24\_p16\_224, xcit\_medium\_24\_p16\_224\_dist, xcit\_medium\_24\_p16\_384\_dist, xcit\_medium\_24\_p8\_224, xcit\_medium\_24\_p8\_224\_dist, xcit\_nano\_12\_p16\_224, xcit\_nano\_12\_p16\_224\_dist, xcit\_nano\_12\_p16\_384\_dist, xcit\_nano\_12\_p8\_224, xcit\_nano\_12\_p8\_224\_dist, xcit\_nano\_12\_p8\_384\_dist, xcit\_small\_12\_p16\_224, xcit\_small\_12\_p16\_224\_dist, xcit\_small\_12\_p16\_384\_dist, xcit\_small\_12\_p8\_224, xcit\_small\_12\_p8\_224\_dist, xcit\_small\_12\_p8\_384\_dist, xcit\_small\_24\_p16\_224, xcit\_small\_24\_p16\_224\_dist, xcit\_small\_24\_p16\_384\_dist, xcit\_small\_24\_p8\_224, xcit\_small\_24\_p8\_224\_dist, xcit\_small\_24\_p8\_384\_dist, xcit\_tiny\_12\_p16\_224, xcit\_tiny\_12\_p16\_224\_dist, xcit\_tiny\_12\_p16\_384\_dist, xcit\_tiny\_12\_p8\_224, xcit\_tiny\_12\_p8\_224\_dist, xcit\_tiny\_12\_p8\_384\_dist, xcit\_tiny\_24\_p16\_224, xcit\_tiny\_24\_p16\_224\_dist, xcit\_tiny\_24\_p16\_384\_dist, xcit\_tiny\_24\_p8\_224, xcit\_tiny\_24\_p8\_224\_dist, xcit\_tiny\_24\_p8\_384\_dist
\section{Correlations with ImageNet validation accuracy}
We note that alignment metrics also have a (weak) U-shaped relationship with ImageNet validation accuracy (see Figure~\ref{fig:val-acc}). We also note the following correlations between ImageNet validation accuracy and other properties which show that validation accuracy does not explain all the variance in downstream properties:
\begin{itemize}
    \item ImageNet-A: 0.81
    \item ImageNet-S: 0.86
    \item ImageNet-R: 0.82
    \item CIFAR10 1-shot learning: 0.68
    \item FMNIST 1-shot learning: 0.10
    \item MNIST 1-shot learning: 0.07
\end{itemize}

\begin{figure}[h!]
    \centering
    \includegraphics[width=0.5\textwidth]{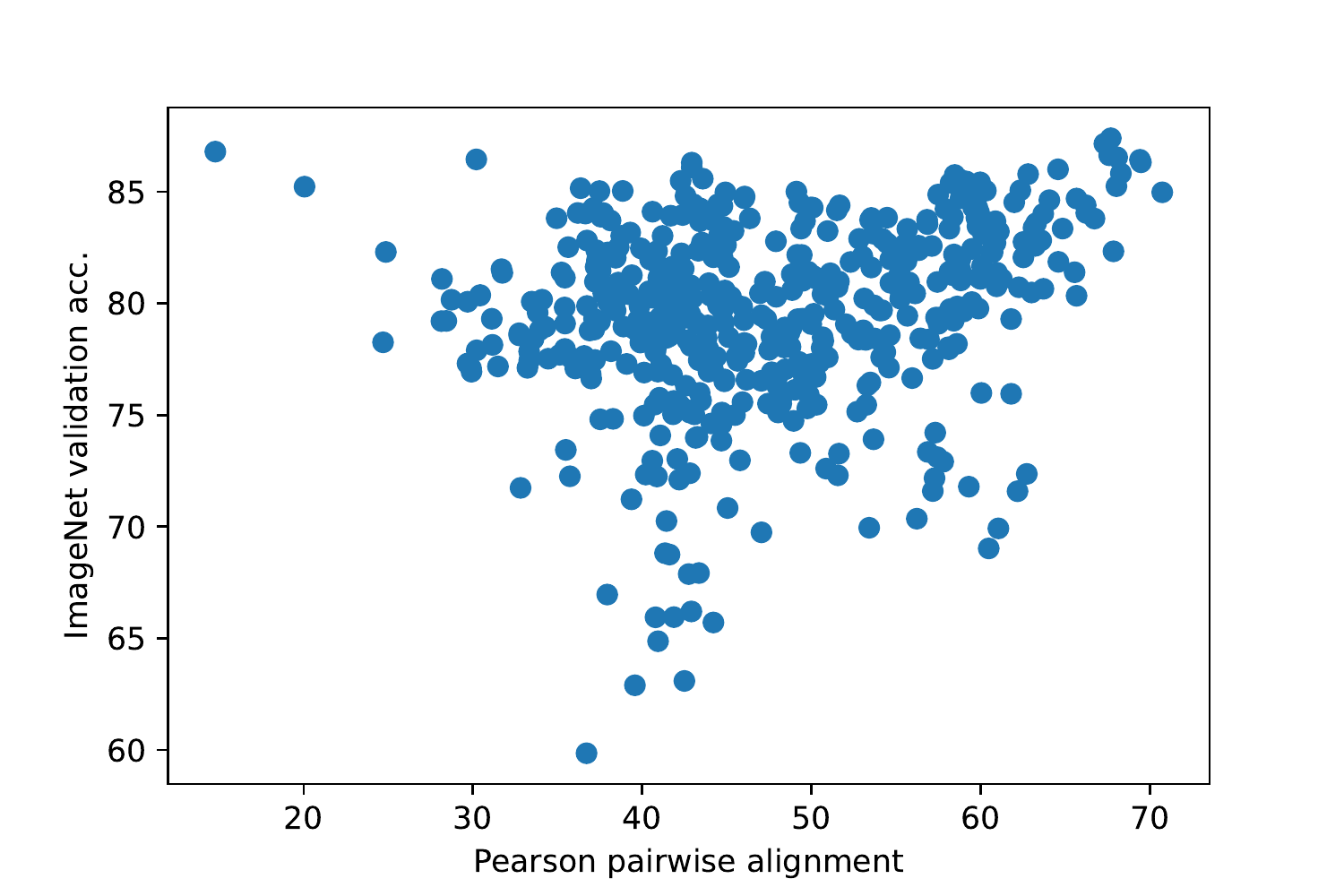}
    \caption{Comparing Pearson pairwise alignment and ImageNet validation accuracy.}
    \label{fig:val-acc}
\end{figure}

\section{Outlier examples}
We compared performance on the original ‘noisy’ ImageNet and relabeled ImageNet-ReaL~\citep{imagenetr}. All models had better performance on ReaL, but the improvement has a weak inverted-U relationship with alignment. This may seem surprising, but our understanding is that the ReaL dataset was collected by relabelling images on which models (pre-2020) disagreed with the original labels. Models with low performance by today’s standards (and alignment near 0.5) were used to select which images to relabel, so it is unsurprising that models with alignment near 0.5 benefit most from this relabeling. This suggests that ImageNet labels may need to be revisited once again as ReaL relabelling may have been biased.

\section{Reproducibility: Code and results data}
All code and full results data are provided as part of the supplemental information. We will share them publicly after the anonymity period is over. All experiments were conducted on an AWS ``x1.16xlarge'' instance (no GPUs).

\end{document}